\newcommand{\eps}{\varepsilon}
\renewcommand{\epsilon}{\varepsilon}
\renewcommand{\b}[1]{\ensuremath{\mathbb{#1}}}
\newcommand{\ensuretext}{\textrm}
\newcommand{\PC}{\ensuremath{\textsc{PlaceCenter}}\xspace}
\newcommand{\place}{\ensuremath{\textsc{Place}}\xspace}
\newcommand{\recenter}{\ensuremath{\textsc{Recenter}}\xspace}
\newcommand{\JL}{\textsf{JL}\xspace}
\newcommand{\REE}{\ensuremath{\textsf{REE}}\xspace}
\newcommand{\fmds}{\ensuretext{f\textsf{MDS}}\xspace}
\newcommand{\cmds}{\ensuretext{c\textsf{MDS}}\xspace}
\newcommand{\rmds}{\ensuretext{r\textsf{MDS}}\xspace}
\newcommand{\rsmds}{\ensuremath{r^2}\ensuretext{\textsf{MDS}}\xspace}
\newcommand{\allsmds}{\{c,g\}-\{1,2\}-s\textsf{MDS}\xspace}
\newcommand{\smds}{\ensuretext{\textsf{SMDS}}\xspace}
\newcommand{\mds}{\ensuretext{\textsf{MDS}}\xspace}
\newcommand{\err}{\textsf{Err}\xspace}
\title{A Unified Algorithmic Framework for Multi-Dimensional Scaling}
\author{
Arvind Agarwal\thanks{Partially supported by NSF IIS-0712764}
%\alignauthor
\and
Jeff M. Phillips\thanks{Supported by a subaward to the University of Utah under NSF award 0937060 to the Computing Research Association}
%\alignauthor
\and
Suresh Venkatasubramanian\thanks{Partially supported by NSF CCF-0953066}
}
\date{}
\begin{document}

\maketitle
\begin{abstract}
In this paper, we propose a unified algorithmic framework for solving many known variants of \mds. Our algorithm is a simple iterative scheme with guaranteed convergence, and is \emph{modular}; by changing the internals of a single subroutine in the algorithm, we can switch cost functions and target spaces easily. In addition to the formal guarantees of convergence, our algorithms are accurate; in most cases, they converge to better quality solutions than existing methods, in comparable time. We expect that this framework will be useful for a number of \mds variants that have not yet been studied. 

Our framework extends to embedding high-dimensional points lying on a sphere to points on a lower dimensional sphere, preserving geodesic distances.  As a compliment to this result, we also extend the Johnson-Lindenstrauss Lemma to this spherical setting, where projecting to a random $O((1/\eps^2) \log n)$-dimensional sphere causes $\eps$-distortion.  
\end{abstract}

%%%%%%%%%%%%%%%%%%%%%%%%%%%%%%%%%%%%%%%%%%%%%%%%%%%%%%%%%%%%%%%
%%%%%%%%%%%%%%%%%%%%%%%%%%%%%%%%%%%%%%%%%%%%%%%%%%%%%%%%%%%%%%%

\section{Introduction}

Multidimensional scaling (\mds)~\cite{KW78,CoxBook,BG05} is a widely used method for embedding a general distance matrix into a low dimensional Euclidean space, used both as a preprocessing step for many problems, as well as a visualization tool in its own right.  \mds has been studied and used in psychology since the 1930s~\cite{YH38,Tor52,kruskal} to help visualize and analyze data sets where the only input is a distance matrix.  More recently \mds has become a standard dimensionality reduction and embedding technique to manage the complexity of dealing with large high dimensional data sets \cite{Cayton,CB09,Pless01embeddingimages,BBK08}.

In general, the problem of embedding an arbitrary distance matrix into a fixed dimensional Euclidean space with minimum error is nonconvex (because of the dimensionality constraint). Thus, in addition to the standard formulation~\cite{deL77}, many variants of \mds have been proposed, based on changing the underlying error function~\cite{YH38,Cayton}. There are also applications where the target space, rather than being a Euclidean space, is a manifold (e.g. a low dimensional sphere), and various heuristics for \mds in this setting have also been proposed~\cite{dLM09,BBK08}. 

Each such variant is typically addressed by a different heuristic, including majorization, the singular value decomposition, semidefinite programming, subgradient methods, and standard Lagrange-multipler-based methods (in both primal and dual settings). Some of these heuristics are efficient, and others are not; in general, every new variant of \mds seems to require different ideas for efficient heuristics.

\subsection{Our Work}
\label{sec:our-work}

In this paper, we present a unified algorithmic framework for solving many
variants of \mds. Our approach is based on an iterative local improvement
method, and can be summarized as follows: ``Pick a point and move it so
that the cost function is locally optimal.  Repeat this process until
convergence.''  The improvement step reduces to a well-studied and efficient family of iterative minimization techniques, where the specific algorithm depends on the variant of \mds.  

A central result of this paper is a single general convergence result for all variants of \mds that we examine. This single result is a direct consequence of the way in which we break down the general problem into an iterative algorithm combined with a point-wise optimization scheme. Our approach is generic, efficient, and simple. The high level framework can be written in 10-12 lines of MATLAB code, with individual function-specific subroutines needing only a few more lines each. Further, our approach compares well with the best methods for all the variants of \mds. In each case our method is consistently either the best performer or is close to the best, regardless of the data profile or cost function used, while other approaches have much more variable performance. Another useful feature of our method is that it is parameter-free, requiring no tuning parameters or Lagrange multipliers in order to perform at its best. 

\paragraph{Spherical \mds.}

An important application of our approach is the problem of performing \emph{spherical} \mds. Spherical \mds is the problem of embedding a matrix of distances onto a (low-dimensional) sphere. Spherical \mds has applications in texture mapping and image analysis~\cite{BBK08}, and is a generalization of the spherical \emph{dimensionality reduction} problem, where the goal is to map points from a high dimensional sphere onto a low-dimensional sphere. This latter problem is closely related to dimensionality reduction for finite dimensional distributions. A well-known  isometric embedding takes a distribution represented as a point on the $d$-dimensional simplex to the $d$-dimensional sphere while preserving the Hellinger distance between distributions. A spherical dimensionality reduction result is an important step to representing high dimensional distributions in a lower-dimensional space of distributions, and will have considerable impact in domains that represent data natively as histograms or distributions, such as in document processing~\cite{Pereira93distributionalclustering,Joachims/02a,944937}, image analysis~\cite{996342,1069007} and speech recognition~\cite{1163421}.

Our above framework applies directly to this setting, where for the local improvement step we adapt a technique first developed by Karcher for finding geodesic means on a manifold. In addition, we prove a Johnson-Lindenstrauss-type result for the sphere; namely, that $n$ points lying on a $d$-dimensional sphere can be embedded on a $O((1/\eps^2)\log n)$-dimensional sphere while approximately preserving the geodesic distances between pairs of points, that is, no distance changes by more than a relative $(1+\eps)$-factor. This latter result can be seen as complementary to the local improvement scheme; the formal embedding result guarantees the error while being forced to use $\log n$ dimensions, while the local improvement strategy generates a mapping into any $k$ dimensional hypersphere but provides no formal guarantees on the error. 

\paragraph{Summary of contributions.}

The main contributions of this paper can be summarized as follows:
\begin{itemize}
\item In Section~\ref{sec:algo} we present our iterative framework, illustrate how it is applied to specific \mds variants and prove a convergence result.
\item In Section~\ref{sec:expts} we present a comprehensive experimental study that compares our approach to the prior best known methods for different \mds variants. 
\item In Section~\ref{sec:sphere} we prove a formal dimensionality reduction result that embeds a set of $n$ points on a high-dimensional sphere into a sphere of dimension $O((1/\eps^2) \log n)$ while preserving all distances to within relative error of $(1+\epsilon)$ for any $\epsilon > 0$.
\end{itemize}

%%%%%%%%%%%%%%%%%%%%%%%%%%%%%%%%%%%%%%%%%%%%%%%%%%%%%%%%%%%%%%%%
\section{Background and Existing Methods}
Multidimensional scaling is a \emph{family} of methods for embedding a distance matrix into a low-dimensional Euclidean space. There is a general taxonomy of \mds methods~\cite{CoxBook}; in this paper we will focus primarily the metric and generalized \mds problems.  

The traditional formulation of \mds~\cite{KW78} assumes that the distance matrix $D$ arises from points in some $d$-dimensional Euclidean space. Under this assumption, a simple transformation takes $D$ to a matrix of \emph{similarities} $S$, where $s_{ij} = \langle x_i, x_j \rangle$. These similarities also arise from many psychology data sets directly~\cite{Tor52,YH38}.  The problem then reduces to finding a set of points $X$ in $k$-dimensional space such that $XX^T$ approximates $S$. This can be done optimally using the top $k$ singular values and vectors from the singular value decomposition of $S$.  

A more general approach called SMACOF that drops the Euclidean assumption uses a technique known as stress majorization~\cite{MO79,deL77,dLM09}. It has been adapted to many other \mds variants as well including restrictions of data to lie on quadratic surfaces and specifically spheres~\cite{dLM09}. 
%Both of these methods have a running time of $O(n^3)$. 

Since the sum-of-squares error metric is sensitive to outliers, Cayton and Dasgupta~\cite{Cayton} proposed a robust variant based on an $\ell_1$ error metric. They separate the rank and cost constraints, solving the latter using either semidefinite programming or a subgradient heuristic, followed by a singular value decomposition to enforce the rank constraints.

Many techniques have been proposed for performing spherical \mds. Among them are majorization methods (\cite{rank} and SMACOF-Q~\cite{dLM09}), a multiresolution approach due to Elad, Keller, and Kimmel~\cite{DBLP:conf/scalespace/ElbazKK05} and an approach based on computing the classical \mds and renormalizing~\cite{Pless01embeddingimages}.

\paragraph{Embeddings that guarantee bounded error.}
\label{sec:embedd-that-guar}
A complementary line of work in dimensionality reduction fixes an error bound for \emph{every} pair of distances (rather than computing an average error), and asks for the minimum dimension a data set can be embedded in while maintaining this error. The Johnson-Lindenstrauss Lemma~\cite{JL84} states that any collection of $n$ points in a Euclidean space can be embedded in a $O((1/\eps^2) \log n)$ dimensional Euclidean space that preserves all distances within a relative error of $\eps$. If the points instead define an abstract metric space, then the best possible result is an embedding into $O(\log n)$-dimensional Euclidean space that preserves distances up to a factor of $O(\log n)$. An exhaustive survey of the different methods for dimensionality reduction is beyond the scope of this paper - the reader is directed to the survey by Indyk and Matousek for more information~\cite{indykbook}.

The Johnson-Lindenstrauss Lemma can be extended to data lying on manifolds. Any manifold $M$ with ``linearization dimension'' $k$ (a measure of its complexity) can be embedded into a $O((1/\eps^2)k \log(kn))$ dimensional space so that all pairwise \emph{Euclidean} distances between points on $M$ are distorted by at most a relative $(1+\eps)$-factor~\cite{AHY07,Sar06,Mag02}.  A $k$-dimensional sphere has linearization dimension $O(k)$, so this bound applies directly for preserving the chordal (i.e. Euclidean) distance between points on a sphere.  The geodesic distance between points on a sphere can be interpreted as the angle between the points in radians, and a result by Magen~\cite{Mag02} show that $O((1/\eps^2) \log n)$ dimensions preserve angles to within a relative factor of $1 + \sqrt{\eps}$ (which is weaker than our result preserving the geodesic distance to within a relative factor of $(1+\eps)$).

%%%%%%%%%%%%%%%%%%%%%%%%%%%%%%%%%%%%%%%%%%%%%%%%%%%%%%%%%%%%%%%%
\section{Definitions}
\label{sec:definitions}

Let $D = (d_{ij})$ be an $n\times n$ matrix representing distances between all pairs of points in a set $Y = \{y_1, \ldots y_n\}$. In general, we assume that $D$ is symmetric $d_{ij} = d_{ji}$, although our method does not formally require this. The multidimensional scaling problem takes as input $Y$, $D$ and $k$, and asks for a mapping $\mu : Y \rightarrow X$ from $Y$ to a set of points $X$ in a $k$-dimensional space $T$ such that the difference between the original and resulting distances is minimized. 

There are many different ways to measure the difference between the sets of distances, and these can be captured by the following general function
\[ 
C(X, D) = \sum_i \sum_j \err(f(x_i,x_j) - d_{ij}) 
\]
where \err measures the discrepancy between the source and target distances, and $f$ denotes the function that measures distance in the target space. 
\begin{itemize}
 \item $T = \reals^k, \err(\delta) = \delta^2, f(x, x') = \|x - x'\|_2$: This is a general form of the \mds problem, which we refer to as \fmds.
\item $T = \reals^k, \err(\delta) = |\delta|, f(x, x') = \|x - x'\|_2$: This is a \emph{robust} variant of \mds called \rmds, first suggested by Cayton and Dasgupta~\cite{Cayton}.
\item $T = \b{S}^k, \err(\delta) = |\delta|$ or $\delta^2$, $f(x, x')$ is either chordal (c) or geodesic distance (g) on $\b{S}^k$. We refer to this family of problems as \allsmds.
\end{itemize}

It will be convenient to split the expression into component terms. We define 
\[ 
C_i(X, D,x_i) = \sum_j \err(f(x_i,x_j) - d_{ij}) 
\]
which allows us to write $C(X,D) = \sum_i C_i(X, D,x_i)$. 

\paragraph{Notes.}
The actual measure studied by Cayton and Dasgupta~\cite{Cayton} is not \rmds. It is a variant which takes the absolute difference of the \emph{squared} distance matrices. We call this measure \rsmds. Also, classical \mds does not appear in this list since it tries to minimize the error between similarities rather than distances. We refer to this measure as \cmds.

%%%%%%%%%%%%%%%%%%%%%%%%%%%%%%%%%%%%%%%%%%%%%%%%%%%%%%%%%%%%%%%%
\section{Algorithm}
\label{sec:algo}

We now present our algorithm $\PC(X,D)$ that finds a mapping $Y \rightarrow X$ minimizing $C(X, D)$.  For now we assume that we are given an initial embedding $X_1 \in \b{R}^k$ to seed our algorithm.   Our experiments indicate the SVD-based approach~\cite{YH38} is almost always the optimal way to seed the algorithm, and we use it unless specifically indicated otherwise.  

\begin{algorithm}
\caption{\PC(D)}
  \begin{algorithmic}
    \STATE Run any \mds strategy to obtain initial seed $X$.
    \REPEAT
      \STATE $\epsilon \leftarrow C(X,D)$
      \FOR{$i = 1$ \textbf{to} $n$}
        \STATE $x_i \leftarrow \place_i(X,D)$ \{this updates $x_i \in X$\}
      \ENDFOR 
    \UNTIL{($\epsilon - C(X,D) < t$)}  \{for a fixed threshold $t$\}
  \STATE \textbf{return} $X$
  \end{algorithmic}
\end{algorithm}

\PC operates by employing a technique from the block-relaxation class of heuristics. The cost function can be expressed as a sum of costs for each point $x_i$, and so in each step of the inner loop we find the best placement for $x_i$ while keeping all other points fixed, using the algorithm $\place_i(X, D)$. A key insight driving our approach is that $\place_i(X, D)$ can be implemented either iteratively or exactly for a wide class of distance functions.  The process terminates when over all $i$, invoking $\place_i(X,D)$ does not reduce the cost $C(X,D)$ by more than a threshold $t$. The algorithm takes $O(n^2)$ for each iteration, since $\place_i(X,D)$ will take $O(n)$ time and computing $C(X,D)$ takes $O(n^2)$ time.

%%%%%%%%%%%%%%%%%%%%%%%%%%%%%%
\subsection{A Geometric Perspective On $\place_i(X, D)$}
The routine $\place_i(X,D)$ is the heart of our algorithm. This routine  finds the optimal placement of a fixed point $x_i$ with respect to the cost function $C_i(X,D, x_i) =$    $\sum_j \err(f(x_i, x_j) - d_{ij})$. Set $r_j = d_{ij}$. Then the optimal placement of $x_i$ is given by the point $x^*$ minimizing the function 
\[
g(x) = \sum_j \err(f(x, x_j) - r_j).
\] 
Note that the terms $f(x, x_i)$ and $r_i = d_{ii}$ are zero, so we can ignore their presence in the summation for ease of notation. 

There is a natural geometric interpretation of $g(x)$, illustrated in Figure~\ref{fig:place}. Consider a sphere around the point $x_j$ of radius $r_j$. Let $\hat{x}_j$ be the point on this sphere that intersects the ray from $x_j$ towards $x$.  Then the distance $f(x, \hat{x}_j) = |f(x, x_j) - r_j|$. Thus, we can rewrite $g(x)$ as 
\[ 
g(x) = \sum_j \err(f(x, \hat{x}_j)). 
\]
\begin{figure}[t]
  \begin{center}
  \includegraphics[width=3in]{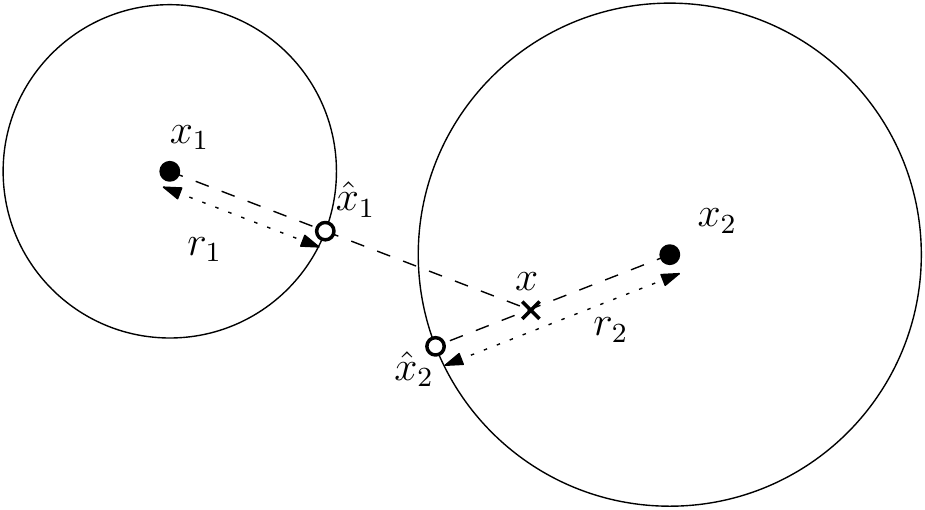}
  \end{center}
\caption{\textsf{A geometric interpretation of the error term $g(x)$.\label{fig:place}}}
\end{figure}

This function is well-known in combinatorial optimization as the min-sum problem. For $\err(\delta) = \delta^2$, $g(x)$ finds the point minimizing the sum-of-squared distances from a collection of fixed points (the $1$-mean), which is the centroid $x^* = \frac{1}{n}\sum_j \hat{x}_j$. For $\err(\delta) = |\delta|$, $g(x)$ finds the $1$-median, the point minimizing the sum of distances from a collection of fixed points.  Although there is no closed form expression for the $1$-median, there are numerous algorithms for solving this problem both exactly~\cite{weiszfeld1937} and approximately~\cite{bose}.  Methods that converge to the global optimum exist for any $\err(\delta) = |\delta|^p, p \le 2$; it is known that if $p$ is sufficiently larger than $2$, then convergent methods may not exist~\cite{brimberg-love}. 

While $g(x)$ can be minimized optimally for error functions $\err$ of interest, the location of the points $\hat{x}_j$ depends on the location of the solution $x^*$, which is itself unknown!  This motivates an alternating optimization procedure, where the current iterate $x$ is used to compute  $\hat{x}_j$, and then these $\hat{x}_j$ are used as input to the min-sum problem to solve for the next value of $x$. 

\begin{algorithm}
\caption{$\place_i(X,D)$}
 \begin{algorithmic}
    \REPEAT 
	\STATE $\epsilon \leftarrow g(x_i)$
      \FOR{$j = 1$ \textbf{to} $n$}
        \STATE $\hat{x}_j \leftarrow $ intersection of sphere of radius $r_j$ around $x_j$ with ray from $x_j$ towards $x_i$
      \ENDFOR
    \STATE $x_i \leftarrow \recenter(\{\hat{x}_1, \hat{x}_2, \ldots, \hat{x}_n\})$
    \UNTIL{$(\epsilon - g(x_i) < t)$} \{for a fixed threshold $t$\}
    \STATE \textbf{return} $x_i$
 \end{algorithmic}
\end{algorithm}

%  
% 
% At the next level, we optimize the location of $x_i$ by repeatedly invoking $x_i \leftarrow \recenter_i(X,D,x_i)$.  This procedure imagines a sphere $o_j$ around each point $x_j \in X$ with a radius $d_{i,j}$.  Let $x_j'$ be the point on $o_j$ closest to $x_i$.  ($x_j'$ can also be interpreted as the intersection with $o_j$ of the ray from $x_j$ through $x_i$.)  A call to $\recenter_i(X, D, x_i)$ returns the point 
% \[
% q_i = \arg \min_{q \in \b{R}^k} \sum_{j=1}^n f(q, x_j')^p,
% \]
% (or when this is itself a difficult problem, a point $q_i$ that decreases that cost function).

%%%%%%%%%%%%%%%%%%%%%%%%%%
\subsection{Implementing \recenter}
Up to this point, the description of \PC and \place has been generic, requiring no specification of $\err$ and $f$. In fact, all the domain-specificity of the method appears in \recenter, which solves the min-sum problem. We now demonstrate how different implementations of \recenter allow us to solve the different variants of \mds discussed above.

%%%%%%%%%%%
\subsubsection{The original \mds: \fmds}
Recall from Section~\ref{sec:definitions} that the \fmds problem is defined by $\err(\delta) = \delta^2$ and $f(x, x') = \|x-x'\|_2$. 
Thus, $g(x) = \sum_j \|x - \hat{x}_j\|^2$. As mentioned earlier, the minimum of this function is attained at $x^* = (1/n)\sum_j \hat{x}_j$. Thus, 
$\recenter(\{\hat{x}_1, \hat{x}_2, \ldots, \hat{x}_n \})$ merely outputs $(1/n)\sum_j \hat{x}_j$, and takes $O(n)$ time per invocation. 

%%%%%%%%%%%
\subsubsection{Robust \mds: \rmds}
The robust \mds problem \rmds is defined by $\err(\delta) = |\delta|$ and $f(x, x') = \|x - x'\|_2$. Minimizing the resulting function $g(x)$ yields the famous Fermat-Weber problem, or the $1$-median problem as it is commonly known. An exact iterative algorithm for solving this problem was given by Weiszfeld~\cite{weiszfeld1937}, and works as follows. 
At each step of $\place_i$ the value $x_i$ is updated by 
\[ 
x_i \leftarrow \left. \sum_j \frac{\hat{x}_j}{\|x_i - \hat{x}_j\|} \right/ \sum_j \frac{1}{\|x_i - \hat{x}_j\|}.
\]
This algorithm is guaranteed to converge to the optimal solution~\cite{kuhn73:_probl,ostresh1978}, and in most settings converges quadratically~\cite{Katz74:_local_}. 

%%%%%%%%%%%
\paragraph{Other norms and distances.}
If $\err(\delta) = |\delta|^p, 1 < p < 2$, then an iterative algorithm along the same lines as the Weiszfeld algorithm can be used to minimize $g(x)$ optimally~\cite{brimberg-love}. In practice, this is the most interesting range of values for $p$. It is also known that for $p$ sufficiently larger than $2$, this iterative scheme may not converge. 

We also can tune $\PC$ to the \rsmds problem (using squared distances) by setting $r_j = d_{ij}^2$.

%%%%%%%%%%%
\subsubsection{Spherical \mds}
Spherical \mds poses special challenges for the implementation of \recenter. Firstly, it is no longer obvious what the definition of $\hat{x}_j$ should be, since the ``spheres'' surrounding points must also lie on the sphere. Secondly, consider the case where $\err(\delta) = \delta^2$, and $f(x,x')$ is given by geodesic distance on the sphere. Unlike in the case of $\reals^k$, we no longer can solve for the minimizer of $g(x)$ by computing the centroid of the given points, because this centroid will not in general lie on the sphere, and even computing the centroid followed by a projection onto the sphere will not guarantee optimality. 

The first problem can be solved easily. Rather than draw spheres around each $x_j$, we draw \emph{geodesic spheres}, which are the set of points at a fixed \emph{geodesic} distance from $x_j$. On the sphere, this set of points can be easily described as the intersection of an appropriately chosen halfplane with the sphere.  Next, instead of computing the intersection of this geodesic sphere with the ray from $x_j$ towards the current estimate of $x_i$, we compute the intersection with a \emph{geodesic ray} from $x_j$ towards $x_i$. 

The second problem can be addressed by prior work on computing min-sums on manifolds. Karcher~\cite{karcher1977} proposed an iterative scheme for the geodesic sum-of-squares problem that always converges as long as the points do not span the entire sphere. His work extends (for the same functions $\err, f$) to points defined on more general Riemannian manifolds satisfying certain technical conditions. It runs in $O(n)$ time per iteration.  

For the robust case ($\err(\delta) = |\delta|$), the Karcher scheme no longer works. For this case, we make use of a Weiszfeld-like adaption~\cite{neuro08} that again works on general Riemannian manifolds, and on the sphere in particular. Like the Weiszfeld scheme, this approach takes $O(n)$ time per iteration. 

%%%%%%%%%%%%%%%%%%%%%%%%%%%%%%%%%%%%%%%%%%%%%%%%%%
\subsection{Convergence Proofs}
Here we prove that each step of $\PC$ converges as long as the recursively called procedures reduce the relevant cost functions.  Convergence is defined with respect to a cost function $\kappa$, so that an algorithm converges if at each step $\kappa$ decreases until the algorithm terminates.  

\begin{theorem}
If each call to $\tilde x_i \leftarrow \place_i(X,D)$ decreases the cost $C_i(X,D,x_i)$, then $\PC(D)$ converges with respect to $C(\cdot, D)$.  
\end{theorem}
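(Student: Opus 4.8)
The plan is to establish convergence through the standard monotone-bounded argument underlying block relaxation: I will show that the sequence of cost values produced by $\PC$ is non-increasing and bounded below, hence convergent, and then argue that this forces the per-iteration decrease below the threshold $t$, triggering termination. Two ingredients are needed: (i) that every invocation of $\place_i$ can only decrease the \emph{global} cost $C(\cdot,D)$, not merely the local cost $C_i$; and (ii) that $C$ is bounded below. Ingredient (ii) is immediate, since $\err(\delta)=\delta^2$ or $\err(\delta)=|\delta|$ is nonnegative, so $C(X,D)\ge 0$ for every configuration $X$.

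The crux is ingredient (i), and it is where I expect the main obstacle to lie. The hypothesis only guarantees that $\place_i$ decreases $C_i(X,D,x_i)=\sum_j \err(f(x_i,x_j)-d_{ij})$, which accounts for the terms in row $i$ of the double sum. But moving $x_i$ also perturbs every term $\err(f(x_j,x_i)-d_{ji})$ sitting in column $i$, i.e.\ the terms that live inside $C_j$ for $j\ne i$, and a priori these could increase. I would resolve this by holding all points except $x_i$ fixed and splitting $C(X,D)$ into the part depending on $x_i$ and the part that does not. Using the symmetry of $f$ (distances satisfy $f(x_i,x_j)=f(x_j,x_i)$) and of $D$ ($d_{ij}=d_{ji}$), the row-$i$ contribution and the column-$i$ contribution are term-by-term equal, while the diagonal term $\err(f(x_i,x_i)-d_{ii})$ is constant in $x_i$ because $f(x_i,x_i)=0$. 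Hence, as a function of $x_i$ alone,
\[
C(X,D) = 2\,C_i(X,D,x_i) + R_i,
\]
where $R_i$ gathers exactly the terms that do not depend on the position of $x_i$ (those with neither index equal to $i$, together with the constant diagonal term). Consequently, replacing $x_i$ by $\tilde x_i$ changes the global cost by precisely twice the change in $C_i$, so a decrease in $C_i$ is a decrease in $C$.

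With ingredient (i) in hand, a single pass of the inner \textbf{for} loop applies $\place_1,\dots,\place_n$ in turn, and since each step leaves $C$ non-increasing, the value of $C(X,D)$ at the end of a full outer iteration is no larger than at its start. Combined with the lower bound $C\ge 0$, the cost values recorded across iterations form a monotone non-increasing sequence that is bounded below, and therefore converges to some limit $C^\ast$. Finally, because the sequence converges, the decrease recorded in one outer iteration, namely $\epsilon - C(X,D)$ (the old cost minus the new cost), tends to $0$; hence it eventually drops below the fixed positive threshold $t$, the \textbf{until} condition fires, and $\PC$ halts. This is precisely convergence of $\PC$ with respect to $C(\cdot,D)$. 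I would note that symmetry of $D$ is the single place where the argument genuinely invokes the assumption from Section~\ref{sec:definitions}; everything else is generic block relaxation and does not depend on the specific choice of $\err$ or $f$.
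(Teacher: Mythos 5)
Your proof is correct and its crux is exactly the paper's argument: the identity $C(X,D) = 2\,C_i(X,D,x_i) + R_i$ (with $R_i$ independent of $x_i$), obtained from symmetry of $f$ and $D$, is precisely how the paper computes $C(X,D) - C(\tilde X, D) = 2C_i(X,D,x_i) - 2C_i(\tilde X, D, \tilde x_i) > 0$. The additional monotone-bounded-sequence step you append to deduce termination is sound but goes beyond what the paper requires, since the paper defines convergence merely as the cost decreasing at every step until the algorithm terminates, so the factor-of-two decomposition alone suffices there.
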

\begin{proof}
Let $\tilde X \leftarrow \place_i(X, D)$ result from running an iteration of $\place_i(X, D)$. Let $\tilde X = \{x_1, \ldots, x_{i-1}$, $\tilde x_i$, $x_{i+1}, \ldots, x_n\}$.  Then we can argue
\begin{align*}
C(X, D) - C(\tilde X, D)
 &= 
2 \sum_{j=1} \err( f(x_i, x_j) - d_{i,j})  
- 2 \sum_{j=1} \err( f(\tilde{x}_i, x_j) - d_{i,j} ) 
\\  &= 
2 C_i(X, D,x_i) - 2 C_i(\tilde X, D, \tilde x_i) > 0.
\end{align*}
The last line follows because $X$ and $\tilde X$ only differ at $x_i$ versus $\tilde x_i$, and by assumption on $\place_i(X,D)$, this sub-cost function must otherwise decrease.  
\end{proof}

\begin{theorem}
If each call $x_i \leftarrow \recenter(\hat X)$ reduces $\sum_{j=1}^n f(x_i, \hat x_j)^p$, then $\place_i(X,D,x_i)$ converges with respect to $C_i(X,D, \cdot)$.
\end{theorem}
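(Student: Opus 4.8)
The plan is to track the cost $C_i(X,D,\cdot)$, which by the substitution $r_j = d_{ij}$ equals $g(\cdot)$, across a single pass of the inner loop of $\place_i$ and show that it strictly decreases; being bounded below by $0$, the resulting sequence then converges in the sense defined above. Fix the estimate of $x_i$ at the start of a pass and call it $x$, and let $x'$ denote the point returned by \recenter on that pass. The one subtlety to keep straight is that the auxiliary points $\hat x_j$ are recomputed every pass, so I would write $\hat x_j(x)$ for the projection of $x$ onto the sphere of radius $r_j$ about $x_j$. With this notation the geometric identity established earlier reads $g(x) = \sum_j \err(f(x,\hat x_j(x))) = \sum_j f(x,\hat x_j(x))^p$.

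First I would invoke the hypothesis on \recenter. Since \recenter is applied to the \emph{frozen} point set $\{\hat x_1(x), \ldots, \hat x_n(x)\}$ and by assumption reduces $\sum_j f(\cdot,\hat x_j(x))^p$, the returned point $x'$ satisfies
\[
\sum_j f(x', \hat x_j(x))^p \;<\; \sum_j f(x, \hat x_j(x))^p \;=\; g(x).
\]
The key step is then to compare this quantity to the \emph{actual} new cost $g(x') = \sum_j f(x',\hat x_j(x'))^p$, in which each $\hat x_j$ has been recomputed at the new location $x'$. Here I would use that $\hat x_j(x')$, being the radial (in the spherical case, geodesic) projection of $x'$ onto the sphere of radius $r_j$ about $x_j$, is exactly the closest point of that sphere to $x'$: the identity $f(x',\hat x_j(x')) = |f(x',x_j) - r_j|$ is precisely the distance from $x'$ to the sphere, whereas $\hat x_j(x)$ is merely some other point lying on the same sphere. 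Hence term by term $f(x',\hat x_j(x')) \le f(x',\hat x_j(x))$, giving
\[
g(x') = \sum_j f(x',\hat x_j(x'))^p \;\le\; \sum_j f(x',\hat x_j(x))^p \;<\; g(x).
\]
Chaining these inequalities yields strict decrease of $g = C_i(X,D,\cdot)$ on every pass, and the boundedness from below finishes the convergence claim.

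The main obstacle is precisely this middle comparison, i.e. correctly accounting for the fact that the $\hat x_j$ move when $x_i$ moves. The \recenter guarantee only controls the cost relative to the \emph{old} projections $\hat x_j(x)$, so one must argue separately that re-projecting onto the spheres at the new point $x'$ can only help; this is exactly the ``closest point on the sphere'' observation, and it is where the geometric meaning of $\hat x_j$ does the real work. In the spherical variants I would additionally verify that this closest-point property of the geodesic spheres continues to hold under the standard non-degeneracy condition (the points not spanning the entire sphere) already required by the Karcher and Weiszfeld-type \recenter subroutines, since the radial-projection argument relies on geodesics behaving like straight lines locally.
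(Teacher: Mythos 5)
Your proposal is correct and follows essentially the same route as the paper's own proof: rewrite $C_i$ as a sum of distances to the projected points $\hat x_j$, apply the \recenter hypothesis on the frozen projections, and then use the fact that the recomputed $\hat x_j$ at the new location are the \emph{closest} points on their spheres to chain the two inequalities. Your explicit $\hat x_j(x)$ versus $\hat x_j(x')$ notation actually states this more carefully than the paper does, but the underlying argument is identical.
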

\begin{proof}
First we can rewrite 
%\begin{align*}
\[
C_i(X,D, x_i) 
= 
\sum_{j=1}^n \err( f(x_i, x_j) - d_{i,j} ) 
= 
\sum_{j=1}^n \err ((f(x_i, \hat x_j) + d_{i,j}) - d_{i,j} )
= 
\sum_{j=1}^n \err( f(x_i, \hat x_j) ).
\]
%\end{align*}
Since $\err(f(x_i, \hat x_j))$ measures the distance to the sphere $\circ_j$, choosing $x_i'$ to minimize (or decrease) $\sum_{j=1}^n \err(f(x_i', \hat x_j))$ must decrease the sum of distances to each point $\hat x_j$ on each sphere $\circ_j$.  Now let $\hat x_j'$ be the closest point to $x_i'$ on $\circ_j$.  $\err(f(x_i', \hat x_j')) \leq \err(f(x_i', x_j))$ and thus 
%\begin{align*}
\[
C_i(X,D, x_i') 
=
\sum_{j=1}^n \err(f(x_i', \hat x_j'))
\leq 
\sum_{j=1}^n \err(f(x_i', \hat x_j'))
\leq 
\sum_{j=1}^n \err(f(x_i, \hat x_j))
 =
C_i(X,D,x_i)
\]
%\end{align*}
where equality only holds if $x_i = x_i'$, in which case the algorithm terminates.
\end{proof}

%%%%%%%%%%%%%%%%%%%%%%%%%%%%%%%%%%%%%%%%%%%%%%%%%%%%%%%%%%%%%%%%
\section{Experiments}
\label{sec:expts}
In this section we evaluate the performance of \PC (PC).  Since PC generalizes to many different cost functions, we compare it with the best known algorithm for each cost function, if one exists.  For the \fmds problem the leading algorithm is SMACOF~\cite{dLM09}; for the \rsmds problem the leading algorithm is by Cayton and Dasgupta (CD)~\cite{Cayton}. We know of no previous scalable algorithm designed for \rmds.  We note that the Cayton-Dasgupta algorithm \REE does not exactly solve the \rsmds problem. Instead, it takes a non-Euclidean distance matrix and finds a Euclidean distance matrix that minimizes the error without any rank restrictions.  Thus, as suggested by the authors~\cite{Cayton}, to properly compare the algorithms, we let CD refer to running \REE and then projecting the result to a $k$-dimensional subspace using the SVD technique~\cite{YH38} (our plots show this projection after each step).  With regards to each of these Euclidean measures we compare our algorithm with SMACOF and CD.
We also compare with the popular SVD-based method~\cite{YH38}, which solves the related \cmds problem based on similarities, by seeding all three iterative techniques with the results of the closed-form SVD-based solution.   

Then we consider the family of spherical \mds problems \allsmds.  We compare against a version of SMACOF-Q~\cite{dLM09} that is designed for data restricted to a low dimensional sphere, specifically for the c-2-\smds measure.  We compare this algorithm to ours under the c-2-\smds measure (for a fair comparison with SMACOF-Q) and under the g-1-\smds measure which is the most robust to noise.  

The subsections that follow focus on individual cost measures. We then discuss the overall behavior of our algorithm in Section~\ref{ssec:expt-summary}. 

%%%%%%%%%%%%%%%%%%%%%%%%%
\paragraph{Data sets, code, and setup.}
\label{ssec:data-gen}
Test inputs for the algorithms are generated as follows. We start with input consisting of a random point set with $n = 300$ points in $\b{R}^d$ for $d=200$, with the target space $T = \b{R}^k$ with $k=10$.  Many data sets in practice have much larger parameters $n$ and $d$, but we limit ourselves to this range  because for larger values CD becomes prohibitively slow. The data is generated to first lie on a $k$-dimensional subspace, and then (full-dimensional) Poisson noise is applied to all points up to a magnitude of 30\% of the variation in any dimension. Finally, we construct the Euclidean distance matrix $D$ which is provided as input to the algorithms. 

These data sets are Euclidean, but ``close'' to $k$-dimensional. To examine the behavior of the algorithms on distance matrices that are non-Euclidean, we generate data as before in a $k$-dimensional subspace and generate the resulting distance matrix $D$. Then we perturb a fraction of the elements of $D$ (rather than perturbing the points) with Poisson noise. The fraction perturbed varies in the set $(2\%, 10\%, 30\%, 90\%)$. 

All algorithms were implemented in MATLAB. For SMACOF, we used the implementation provided by Bronstein~\cite{smacofcode}, and built our own implementation of SMACOF-Q around it. For all other algorithms, we used our own implementation\footnote{All of our code may be found at \url{http://www.cs.utah.edu/~arvind/smds.html}.}. In all cases, we compare performance in terms of the error function $\err$ as a function of clock time. 

%%%%%%%%%%%%%%%%%%%%%%%%%%%%%%%%%%%%
\subsection{The \rmds Problem}
Figure \ref{fig:rmds_convergence} shows the cost function $\err$ associated with \rmds plotted with respect to runtime.  
$\PC$ always reaches the best local minimum, partially because only $\PC$ can be adjusted for the \rmds problem.  
We also observe that the runtime is comparable to SMACOF and much faster than CD in order to get to the same \err value.  Although SMACOF initially reaches a smaller cost that PC, it later converges to a larger cost because it optimizes a different cost function (\fmds).

\begin{figure}[tc]
\begin{center}
\subfigure[]{
\includegraphics[width=0.48\textwidth]{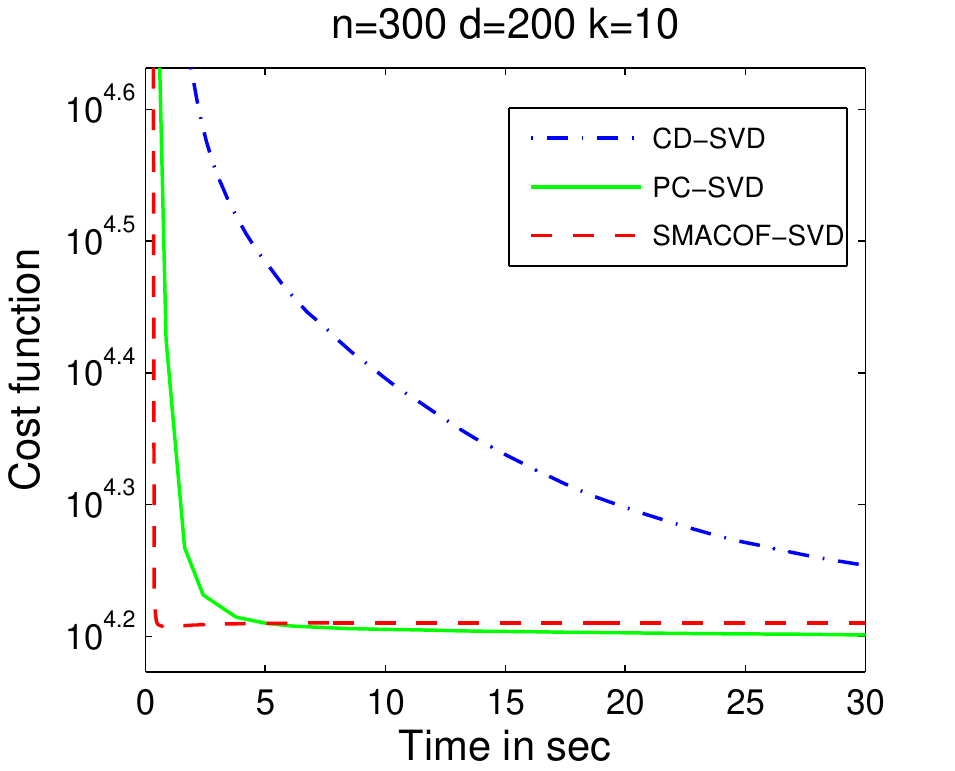}
\label{fig:rmds_convergence}
}
\subfigure[]{
\includegraphics[width=0.48\textwidth]{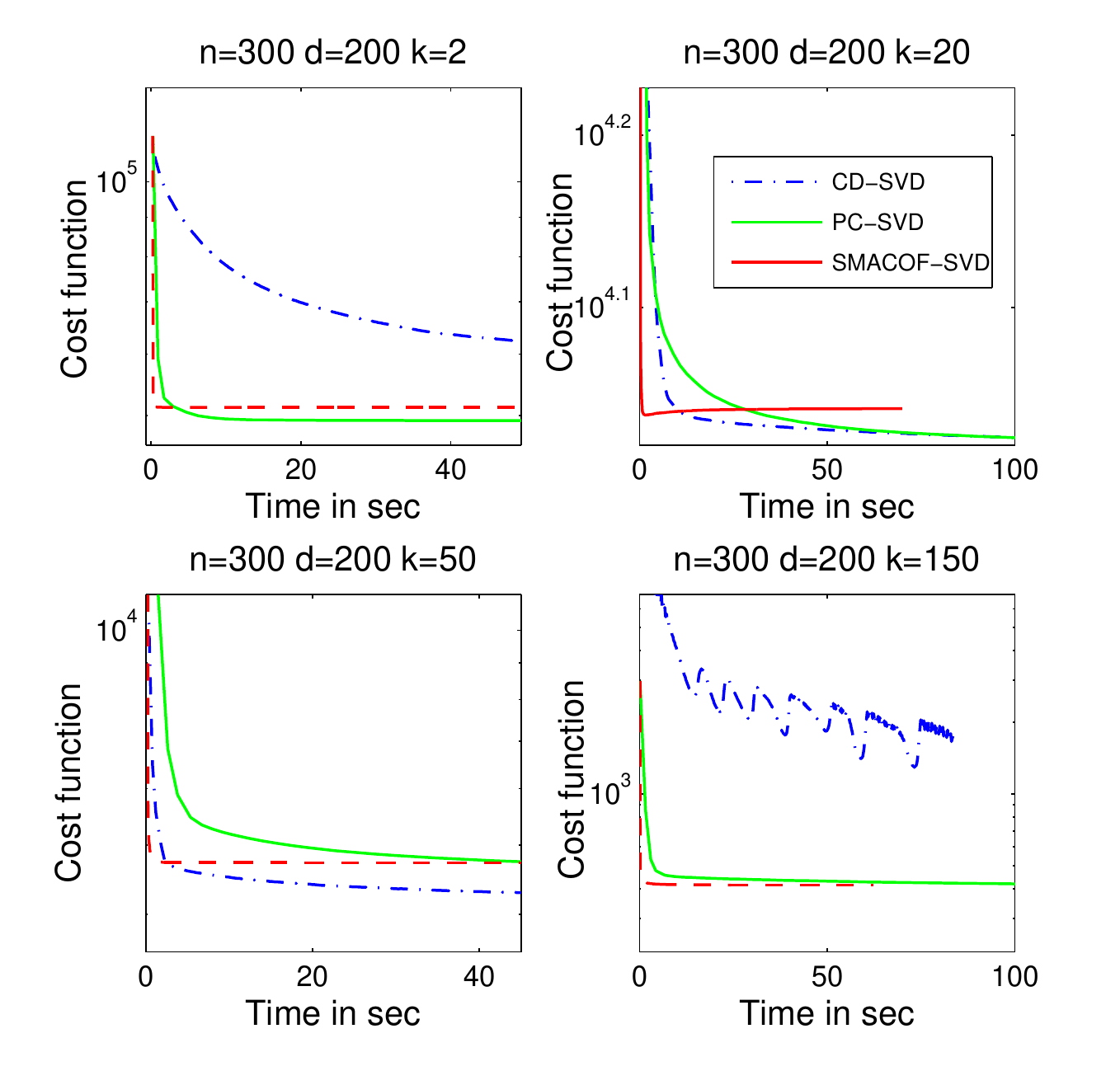}
\label{fig:rmds_changeK}
}
\caption{\textsf{(a) \rmds: Typical behavior of PC, CD and SMACOF.
(b) \rmds: Variation with $k=2,20,50,150$.}}
\end{center}
\end{figure}

%\begin{figure}[htp]
%\begin{center}
%\includegraphics[width=0.48\textwidth]{figs/rmds_changeK}
%\caption{\rmds: Variation with $k=2,20,50,150$.}
%\label{fig:rmds_changeK}
%\end{center}
%\end{figure}

We repeat this experiment in Figure \ref{fig:rmds_changeK} for different values of $k$ (equal to $\{2,20,50,150\}$) to analyze the performance as a function of $k$.  Note that PC performs even better for lower $k$ in relation to CD.  This is likely as a result of CD's reliance on the SVD technique to reduce the dimension.  At smaller $k$, the SVD technique has a tougher job to do, and optimizes the wrong metric.  Also for $k=150$ note that CD oscillates in its cost; this is again because the REE part finds a nearby Euclidean distance matrix which may be inherently very high dimensional and the SVD projection is very susceptible to changes in this matrix for such large $k$.
We observe that SMACOF is the fastest method to reach a low cost, but does not converge to the lowest cost value.  The reason it achieves a cost close to that of PC is that for this type of data the \rmds and \fmds cost functions are fairly similar.

In Figure \ref{fig:rmds_noise} we evaluate the effect of changing the amount of noise added to the input distance matrix $D$, as described above.  
% Data here is generated again to lie on a $k$-dimensional subspace, but then \emph{Noise\%} $(=2\%, 10\%, 30\%, 90\%)$ of the points are perturbed with Poisson noise as before, creating an non-Euclidean distance matrix as input to the algorithms.  
We consider two variants of the CD algorithm, one where it is seeded with an SVD-based seed (marked CD+SVD) and one where it is seeded with a  random projection to a $k$-dimensional subspace (marked CD+rand).  In both cases the plots show the results of the REE algorithm after SVD-type projections back to a $k$-dimensional space.  

The CD+SVD technique consistently behaves poorly and does not improve with further iterations.  This probably is because the REE component  finds the closest Euclidean distance matrix which may correspond to points in a much high dimensional space, after which it is difficult for the SVD to help.  The CD+rand approach does much better, likely because the random projection initializes the procedure in a reasonably low dimensional space so REE can find a relatively low dimension Euclidean distance matrix that is nearby.  
SMACOF is again the fastest algorithm, but with more noise, the difference between \fmds and \rmds is larger, and thus SMACOF converges to a configuration with much higher cost than PC.  
We reiterate that PC consistently converges to the lowest cost solution among the different methods, and consistently is either the fastest or is comparable to the fastest algorithm.  
We will see this trend repeated with other cost measures as well.  

\begin{figure}[tc]
\begin{center}
\includegraphics[width=0.48\textwidth]{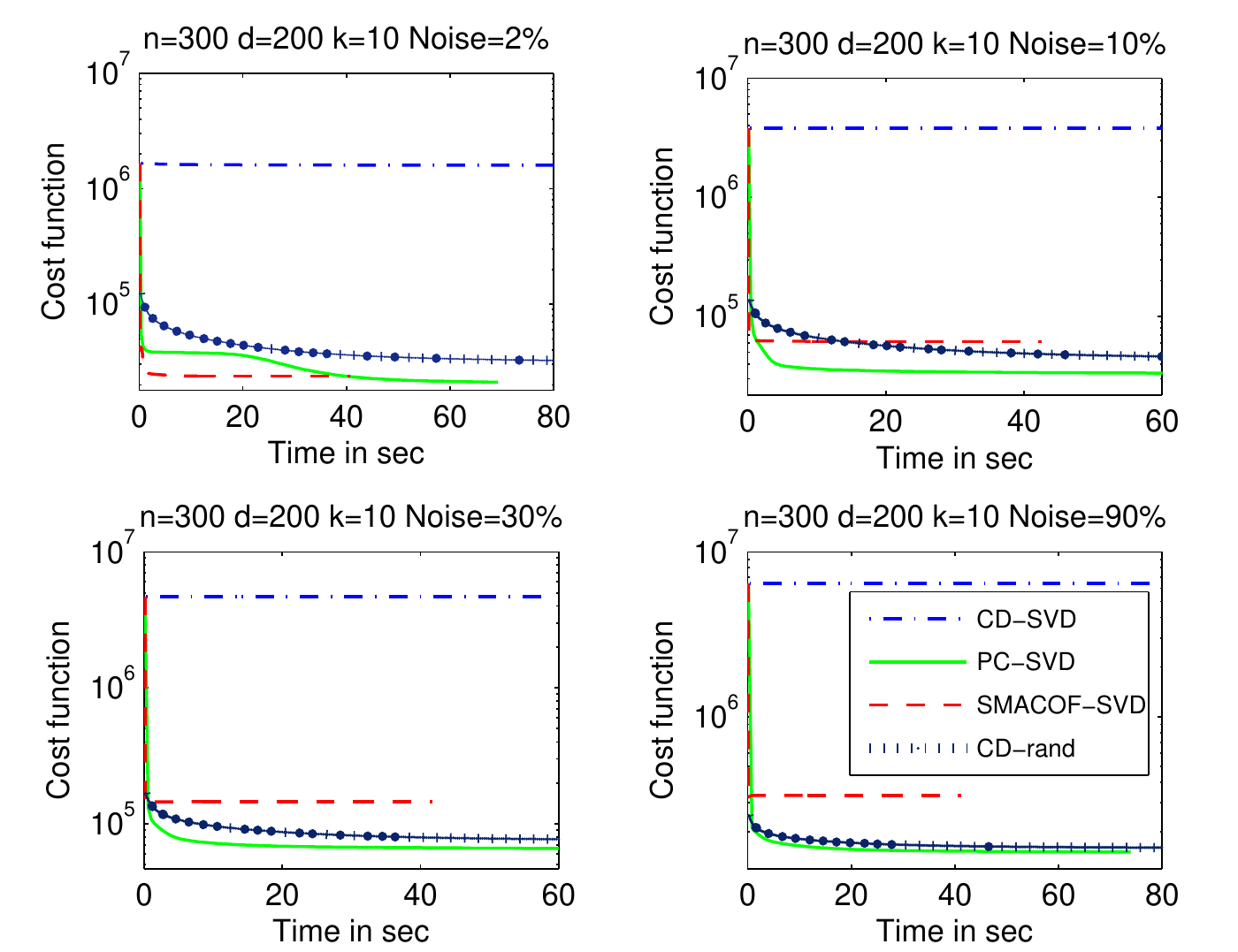}
\caption{\textsf{\rmds: Variation with noise$=2,10,30,90$.\label{fig:rmds_noise}}}
\end{center}
\end{figure}

% \begin{figure}[tc]
% \begin{center}
% \includegraphics[width=0.45\textwidth]{figs/medianMdsConvergence_k}
% \caption{Comparison of $\PC$, CD with cMDs initialization for different values of $k=2,5,30,90$.}
% \label{fig:medianMdsConvergence_k}
% \end{center}
% \end{figure}

%%%%%%%%%%%%%%%%%%%%%%%%%%%
\subsection{The \fmds Problem}
We next evaluate the algorithms PC, SMACOF, and CD under the \fmds distance measure.  The results are very similar to the \rmds case except now both SMACOF and PC are optimizing the correct distance measure and converge to the same local minimum.  SMACOF is still slightly faster that PC, but since they both run very fast, the difference is of the order of less than a second even in the very worst part of the cost/time tradeoff curve shown in Figure 
%\ref{fig:fmds_convergence} and 
\ref{fig:fmds_changeK}.  
Note that CD performs poorly under this cost function here except when $k=50$.  For smaller values of $k$, the SVD step does not optimize the correct distance and for larger $k$ the REE part is likely finding an inherently very high dimensional Euclidean distance matrix, making the SVD projection very noisy.  

%\begin{figure}[tc]
%\begin{center}
%\includegraphics[width=0.10\textwidth]{figs/fmds_convergence}
%\caption{\fmds: Convergence results - (plan to omit, does not add anything new)}
%\label{fig:fmds_convergence}
%\end{center}
%\end{figure}

%\begin{figure}[tc]
%\begin{center}
%\includegraphics[width=0.48\textwidth]{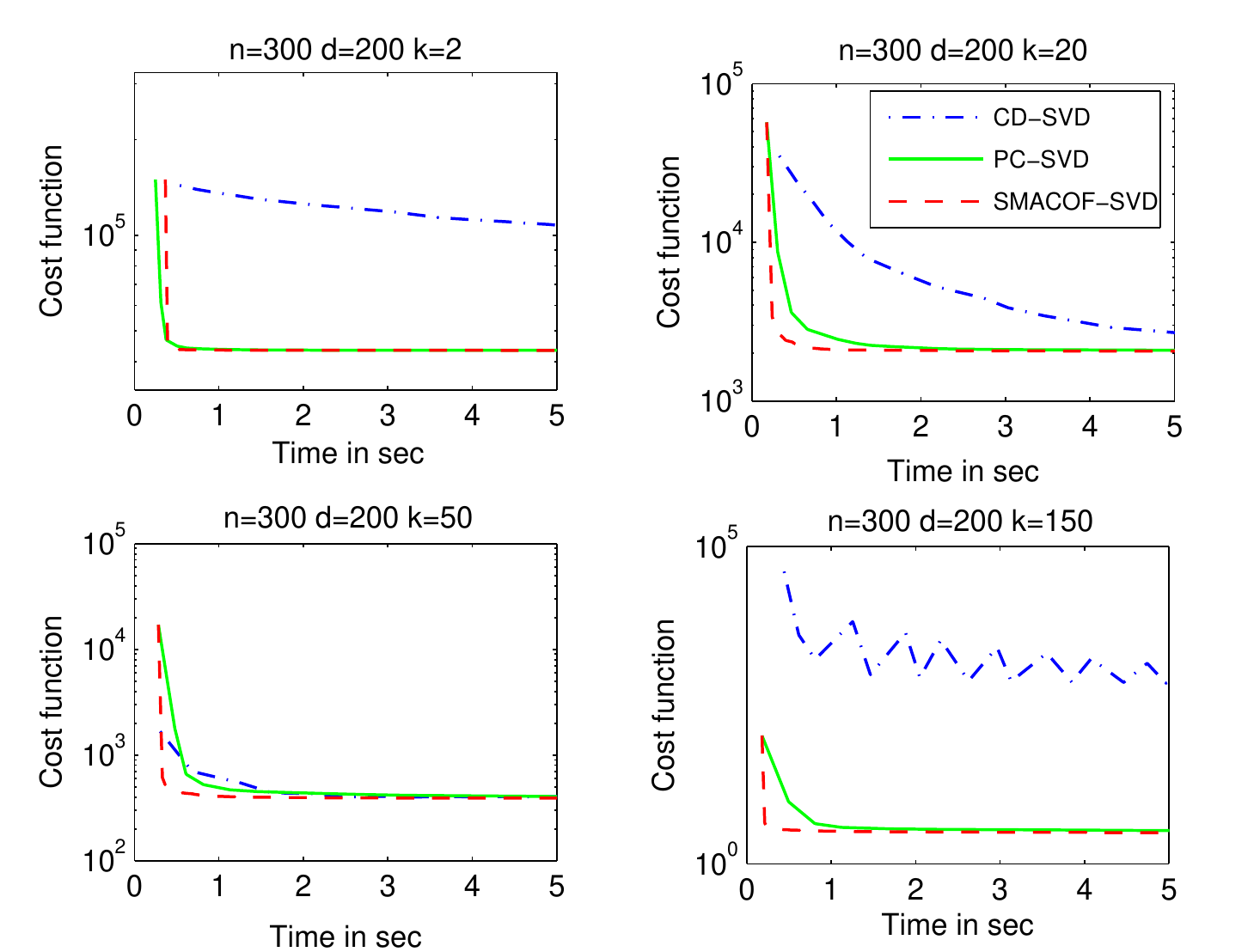}
%\caption{\fmds: Variation with $k=2,20,50,150$}
%\label{fig:fmds_changeK}
%\end{center}
%\end{figure}

\begin{figure}[tc]
\begin{center}
\subfigure[]{
\includegraphics[width=0.48\textwidth]{figs/fmds_changeK}
\label{fig:fmds_changeK}
}
\subfigure[]{
\includegraphics[width=0.48\textwidth]{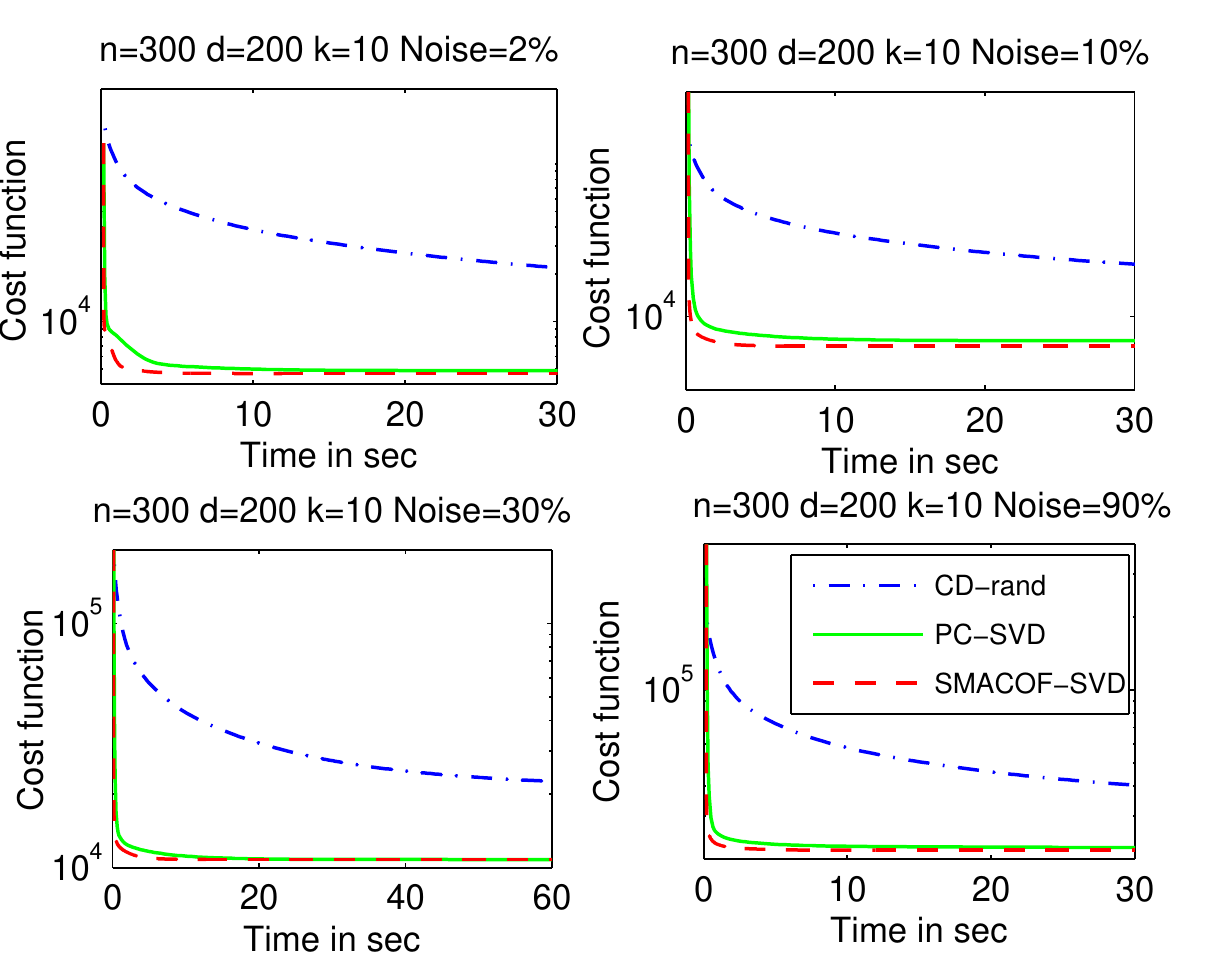}
\label{fig:fmds_noise}
}
\caption{\textsf{(a) \fmds: Variation with $k=2,20,50,150$.
(b) \fmds: Variation with noise$=2,10,30,90$.}}
\end{center}
\end{figure}

For the \fmds measure, SMACOF and PC perform very similarly under different levels of noise, both converging to similar cost functions with SMACOF running a bit faster, as seen in Figure \ref{fig:fmds_noise}.  CD consistently runs slower and converges to a higher cost solution. 
%which is not surprising since neither the REE or SVD step minimize the \fmds cost measure.  

%\begin{figure}[tc]
%\begin{center}
%\includegraphics[width=0.48\textwidth]{figs/new_fmds_noise}
%\caption{\fmds: Variation with noise$=2,10,30,90$.}
%\label{fig:fmds_noise}
%\end{center}
%\end{figure}

%%%%%%%%%%%%%%%%%%%%%%%%%%%
\subsection{The {\large\rsmds} Problem}
In this setting we would expect CD to perform consistently as well as PC because both  minimize the same cost function.  However, this is not always the case because CD requires the SVD step to generate a point set in $\b{R}^k$.  As seen in Figure \ref{fig:r2mds_changeK} this becomes a problem when $k$ is small ($k = 2,10$).  For medium values of $k$, CD converges slightly faster than PC and sometimes to a slightly lower cost solution, but again for large $k$ ($= 150$), the REE part has trouble handling the amount of error and the solution cost oscillates. 
SMACOF is again consistently the fastest to converge, but unless $k$ is very large (i.e. $k=150$) then it converges to a significantly worse solution  because the \fmds and \rsmds error functions are different.  

%\begin{figure}[tc]
%\begin{center}
%\includegraphics[width=0.10\textwidth]{figs/r2mds_convergence}
%\caption{\rsmds: Convergence results.}
%\label{fig:r2mds_convergence}
%\end{center}
%\end{figure}

\begin{figure}[tb!]
\begin{center}
\includegraphics[width=0.48\textwidth]{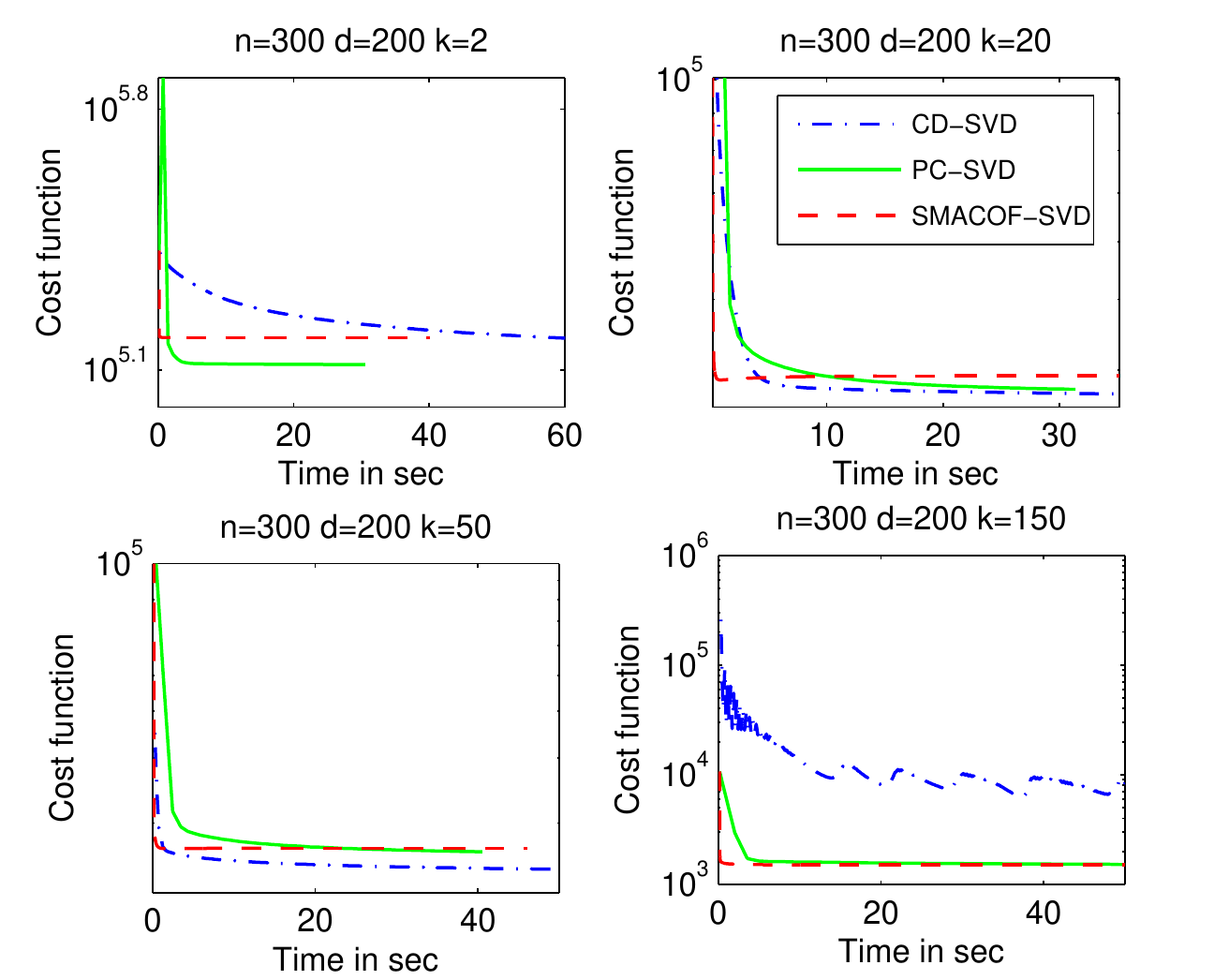}
\caption{\textsf{\rsmds: Variation with $k=2,20,50,150$.}}
\label{fig:r2mds_changeK}
\end{center}
\end{figure}

%\begin{figure}[tc]
%\begin{center}
%\includegraphics[width=0.25\textwidth]{figs/r2mds_noise}
%\caption{\rsmds: Variation with noise.}
%\label{fig:r2mds_noise}
%\end{center}
%\end{figure}

%%%%%%%%%%%%%%%%%%%%%%%%%%%
\subsection{The Spherical \mds Problem}
For the spherical \mds problem we compare PC against SMACOF-Q, an adaptation of SMACOF to restrict data points to a low-dimensional sphere, and a technique of Elad, Keller and Kimmel~\cite{DBLP:conf/scalespace/ElbazKK05}.  It turns out that the Elad \emph{et.al.} approach consistently performs poorly compared to both other techniques, and so we do not display it in our reported results. SMACOF-Q basically runs SMACOF on the original data set, but also adds one additional point $p_0$ at the center of the sphere.  The distance $d_{0,i}$ between any other point $p_i$ and $p_0$ is set to be $1$ thus encouraging all other points to be on a sphere, and this constraint is controlled by a weight factor $\kappa$, a larger $\kappa$ implying a stronger emphasis on satisfying this constraint. Since the solution produced via this procedure may not lie on the sphere, we normalize all points to the sphere after each step for a fair comparison.  

Here we compare PC against SMACOF-Q in the g-1-\smds (Figure \ref{fig:l1Geodesic}) and the c-2-\smds (Figure \ref{fig:l2Chordal}) problem.  
For g-1-\smds, PC does not converge as quickly as SMACOF-Q with small $\kappa$, but it reaches a better cost value.  However, when SMACOF-Q is run with a larger $\kappa$, then PC runs faster and reaches nearly the same cost value.  For our input data, the solution has similar g-1-\mds and c-1-\mds cost.  When we compare SMACOF-Q with PC under c-2-\mds (Figure \ref{fig:l2Chordal}) then for an optimal choice of $\kappa$ in SMACOF-Q, both PC and SMACOF-Q perform very similarly, converging to the same cost function and in about the same time.  But for larger choices of $\kappa$ SMACOF-Q does much worse than PC.

\begin{figure}[htbp]
\begin{center}
\subfigure[]{
\includegraphics[width=0.48\textwidth]{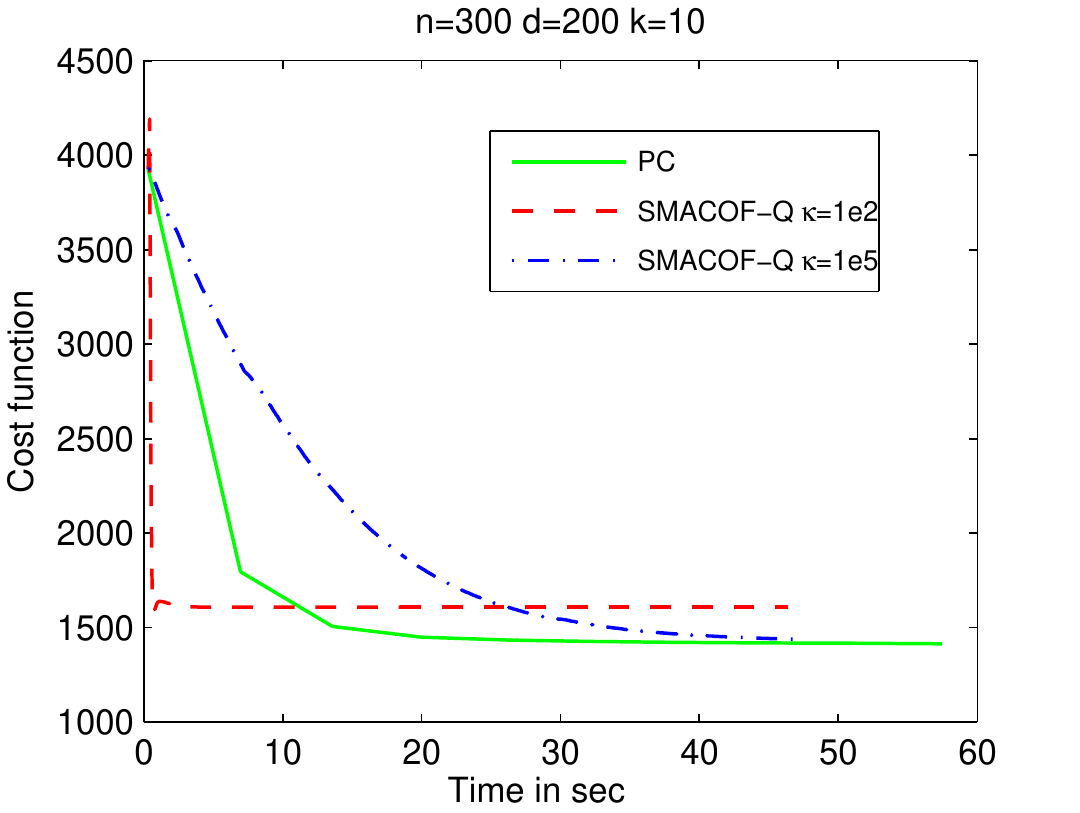}
\label{fig:l1Geodesic}
}
\subfigure[]{
\includegraphics[width=0.48\textwidth]{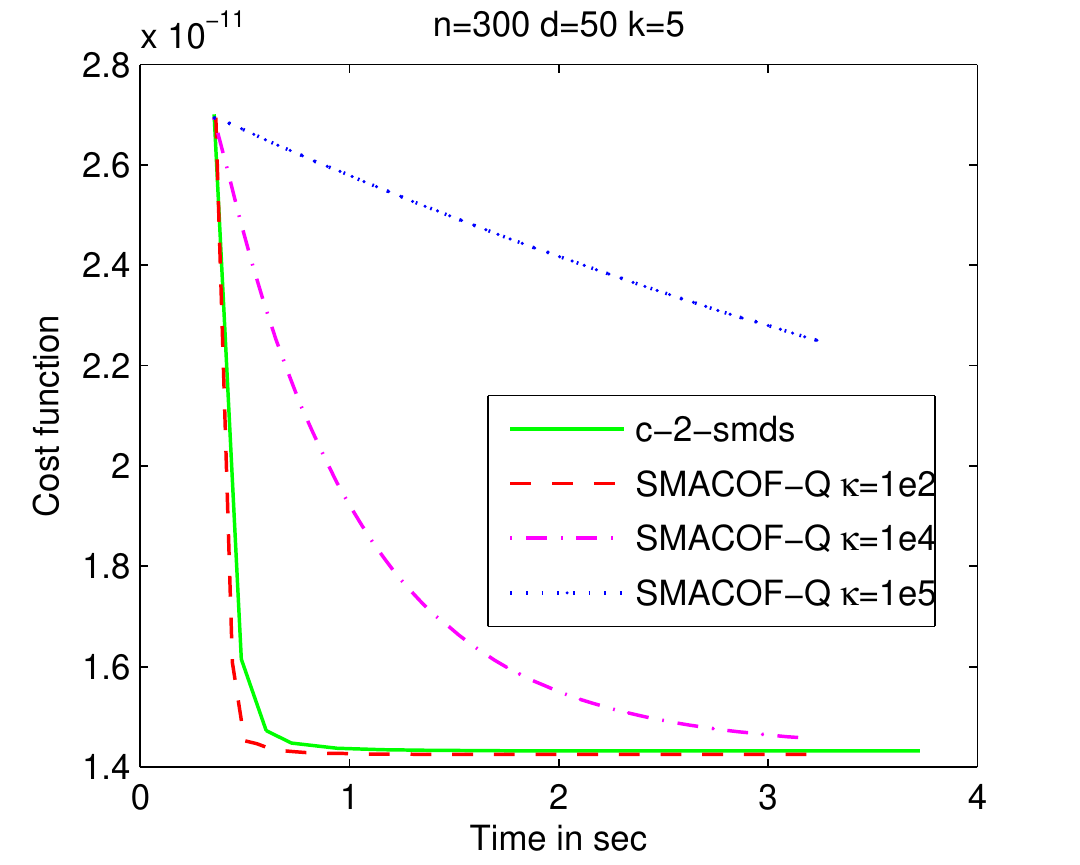}
\label{fig:l2Chordal}
}
\caption{\textsf{(a) g-1-\smds: Comparing PC with SMACOF-Q for different values of penalty parameter $\kappa$.  
(b) c-2-\smds: Comparing PC with SMACOF-Q for different values of penalty parameter $\kappa$.
}}
\end{center}
\end{figure}

%\begin{figure}[tc]
%\begin{center}
%\includegraphics[width=0.48\textwidth]{figs/l1Geodesic}
%\caption{g-1-\smds: Comparing PC with SMACOF-Q for different values of penalty parameter $\kappa$.}
%\label{fig:l1Geodesic}
%\end{center}
%\end{figure}

%\begin{figure}[htc]
%\begin{center}
%\includegraphics[width=0.48\textwidth]{figs/l2Chordal}
%\caption{c-2-\smds: Comparing PC with SMACOF-Q for different values of penalty parameter $\kappa$}
%\label{fig:l2Chordal}
%\end{center}
%\end{figure}

In both cases, it is possible to find a value of $\kappa$ that allows SMACOF-Q to match PC. However, this value is different for different settings, and varies from input to input. The key observation here is that since PC is \emph{parameter-free}, it can be run regardless of the choice of input or cost function, and consistently performs well.

%%%%%%%%%%%%%%%%%%%%%%%%%%%%%%%%%
\subsection{Summary Of Results}
\label{ssec:expt-summary}

In summary, here are the main conclusions that can be drawn from this experimental study. Firstly, PC is consistently among the top performing methods, regardless of the choice of cost function, the nature of the input, or the level of noise in the problem. Occasionally, other methods will converge faster, but will not in general return a better quality answer, and different methods have much more variable behavior with changing inputs and noise levels.

%%%%%%%%%%%%%%%%%%%%%%%%%%%%%%%%%%%%%%%%%%%%%%%%%%%%%%%%%%%%
%%%%%%%%%%%%%%%%%%%%%%%%%%%%%%%%%%%%%%%%%%%%%%%%%%%%%%%%%%%%
%%%%%%%%%%%%%%%%%%%%%%%%%%%%%%%%%%%%%%%%%%%%%%%%%%%%%%%%%%%%
\section{A JL Lemma for Spherical Data}
\label{sec:sphere}
In this section we present a Johnson-Lindenstrauss-style bound for mapping data from a high dimensional sphere to a low-dimensional sphere while preserving the distances to within a multiplicative error of $(1+\epsilon)$.

Consider a set $Y \subset \b{S}^{d} \subset \b{R}^{d+1}$ of $n$ points, defining a distance matrix $D$ where the element $d_{i,j}$ represents the geodesic distance between $y_i$ and $y_j$ on $\b{S}^k$.  
We seek an embedding of $Y$ into $\b{S}^d$ that preserves pairwise distances as much as possible.  For a set $Y \in \b{S}^d$ and a projection $\pi(Y) = X \subset \b{S}^k$ we say the $X$ has \emph{$\gamma$-distortion} from $Y$ if these exists a constant $c$ such that for all $x_i, x_j \in X$
$$ (1-\gamma)f(y_i, y_j) \leq c f(x_i, x_j) \leq (1+\gamma)f(y_i, y_j). $$

For a subspace $H = \b{R}^k$, let $\pi_H(Y)$ be the projection of $Y \in \b{R}^d$ onto $H$ and then scaled by $d/k$.  For $X \in \b{R}^k$, let $S(X)$ be the projection to $\b{S}^{k-1}$, that is for all $x \in X$, the corresponding point in $S(X)$ is $x / ||x||$.  

When $f(y_i, y_j) = ||y_i - y_j||$, and $Y \in \b{R}^d$, then the Johnson-Lindenstrauss (\JL) Lemma~\cite{JL84} says that if $H \subset \b{R}^d$ is a random $k$-dimensional linear subspace with $k = O((1/\eps^2) \log (n/\delta))$, then $X = \pi_H(Y)$ has $\eps$-distortion from $Y$ with probability at least $1-\delta$.  

We now present the main result of this section. 
We note that recent results~\cite{AHY07} have shown similar results for point on a variety of manifolds (including spheres) where projections preserve Euclidean distances.  We reiterate that our results extend this to geodesic distances on spheres which can be seen as angle $\angle_{x,y}$ between the vectors to points $x, y \in \b{S}^k$.  Another recent result~\cite{Mag02} shows that $k = O((1/\eps^2) \log(n/\delta))$ dimensions preserves $\sqrt{\eps}$-distortion in angles, which is weaker than the following result. 

\begin{theorem}
Let $Y \subset \b{S}^d \subset \b{R}^{d+1}$, and let $H = \b{R}^{k+1}$ be a random subspace of $\b{R}^d$ with $k = O((1/\eps^2) \log (n/\delta))$ with $\eps \in (0,1/4]$.  Let $f(y_i, y_j)$ measure the geodesic distance on $\b{S}^d$ (or $\b{S}^k$ as appropriate).  
Then $S(\pi_H(Y))$ has $\eps$-distortion from $Y$ with probability at least $1-\delta$.
\label{thm:JL-sphere}
\end{theorem}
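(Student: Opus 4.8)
The plan is to reduce the geodesic (angular) distortion to two ordinary Johnson--Lindenstrauss statements about Euclidean norms, tied together by the tangent half-angle identity. For unit vectors $y_i, y_j \in \b{S}^d$ with geodesic distance (angle) $\theta_{ij} = f(y_i,y_j)$, the identities $\|y_i - y_j\|^2 = 2 - 2\cos\theta_{ij}$ and $\|y_i + y_j\|^2 = 2 + 2\cos\theta_{ij}$ give
\[
\tan(\theta_{ij}/2) = \frac{\|y_i - y_j\|}{\|y_i + y_j\|}.
\]
The first observation I would record is that the renormalization $S(\cdot)$ is irrelevant for measuring angles: rescaling a vector to unit length does not change the angle it makes with another vector, so $f(S(\pi_H y_i), S(\pi_H y_j))$ equals the angle $\theta'_{ij}$ between the raw projections $\pi_H y_i$ and $\pi_H y_j$, and the same tangent identity applies to these projected vectors. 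Hence it suffices to control how a random linear projection distorts angles.

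Next I would apply the classical \JL Lemma not to $Y$ alone but to the $2n$-point set $Y \cup (-Y)$. A random projection $\pi_H$ of dimension $O((1/\eps^2)\log(n/\delta))$, with its usual norm-preserving scaling, then preserves with probability $\ge 1-\delta$ every pairwise Euclidean distance in this augmented set to within a relative $(1\pm\eps)$ factor (the augmented set has $O(n^2)$ pairs, so the dimension bound is unchanged by the union bound). In particular, for every pair this simultaneously yields $\|\pi_H(y_i - y_j)\| = (1\pm\eps)\|y_i - y_j\|$ and, since $y_i + y_j = y_i - (-y_j)$, also $\|\pi_H(y_i + y_j)\| = (1\pm\eps)\|y_i + y_j\|$. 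The key point is that the common scaling constant and the two $(1\pm\eps)$ factors nearly cancel in the ratio, so that
\[
\tan(\theta'_{ij}/2) = \frac{\|\pi_H(y_i - y_j)\|}{\|\pi_H(y_i + y_j)\|} = \frac{1\pm\eps}{1\pm\eps}\,\tan(\theta_{ij}/2) = (1 \pm O(\eps))\,\tan(\theta_{ij}/2),
\]
i.e. the half-angle tangents are preserved to within a relative $O(\eps)$ factor. (The exact antipodal case $y_i=-y_j$ is handled separately: then $\pi_H(y_i+y_j)=0$ and the angle $\pi$ is preserved exactly.)

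The final and most delicate step is to transfer relative distortion of $\tan(\theta/2)$ into relative distortion of $\theta$ itself, \emph{uniformly} over the whole range $\theta \in [0,\pi]$; this is where the near-colinear ($\theta \to 0$) and near-antipodal ($\theta \to \pi$) regimes must be controlled, and it is exactly what separates an $\eps$-bound from Magen's weaker $\sqrt{\eps}$-bound. Writing $\phi = \theta/2 \in [0,\pi/2]$ and passing to logarithms (so relative errors become additive), I would bound the sensitivity
\[
\frac{d\log\phi}{d\log\tan\phi} = \frac{\sin\phi\cos\phi}{\phi} = \frac{\sin 2\phi}{2\phi} \in [0,1],
\]
which is at most $1$ for all $\phi \in (0,\pi/2]$ because $\sin x \le x$. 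Integrating this bound along the path from $\phi$ to the perturbed $\phi'$ gives $|\log\phi' - \log\phi| \le |\log(\tan\phi'/\tan\phi)| = O(\eps)$, hence $\theta' = (1\pm O(\eps))\theta$. Rescaling $\eps$ by the hidden constant (absorbed into $k = O((1/\eps^2)\log(n/\delta))$, with $\eps \le 1/4$ ensuring $\log(1+O(\eps)) = O(\eps)$) then yields the claimed $\eps$-distortion. I expect this last transfer to be the main obstacle; it works precisely because the antipodal blow-up of $\tan(\theta/2)$ is matched by the vanishing of $\|y_i+y_j\|$, whose \emph{relative} size \JL still preserves.
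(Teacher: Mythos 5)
Your reduction has one genuinely broken step, and it is the step that carries all the difficulty of the theorem. The identity $\tan(\theta/2)=\|y_i-y_j\|/\|y_i+y_j\|$ is valid only for vectors of \emph{equal} norm, and equality of norms is exactly what the projection destroys. For $a=\pi_H y_i$, $b=\pi_H y_j$, the correct identity is $\tan(\theta'_{ij}/2)=\|\hat a-\hat b\|/\|\hat a+\hat b\|$ with $\hat a=a/\|a\|$, $\hat b=b/\|b\|$, whereas your ratio uses $a-b$ and $a+b$; the discrepancy is governed by $\langle a-b,a+b\rangle=\|a\|^2-\|b\|^2$, which is exactly $0$ before projection but afterwards is only bounded by \JL \emph{additively}, $|\,\|a\|^2-\|b\|^2|=O(\eps)$. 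That additive error swamps the signal at small angles. Concretely, every fact you invoke ($\|a\|,\|b\|=1\pm\eps$, $\|a-b\|=(1\pm\eps)\|y_i-y_j\|$, $\|a+b\|=(1\pm\eps)\|y_i+y_j\|$) is simultaneously satisfied by the collinear configuration $a=(1+\theta_{ij}/2)u$, $b=(1-\theta_{ij}/2)u$ for a unit vector $u$ whenever $\theta_{ij}\leq 2\eps$, and there the projected angle is $0$: no relative guarantee can follow from those facts alone. Tracking the error terms, your argument yields $(1\pm O(\eps))$ distortion only for $\theta_{ij}\gtrsim\sqrt{\eps}$ --- i.e.\ precisely Magen's $\sqrt{\eps}$ barrier that you set out to beat. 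The paper's proof never forms the sum vector: its short case bounds $\sin(\angle_{x_i,x_j}/2)$ directly from the chord and the two norms, converts to angles with Lemma~\ref{lem:small-angle}, and its long case subdivides the great-circle arc into seven short arcs and adds angles using coplanarity.

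The gap is repairable inside your framework, and the repair identifies the missing ingredient. Let $v=(y_i+y_j)/\|y_i+y_j\|$ and $w=(y_i-y_j)/\|y_i-y_j\|$; these are orthonormal, and $y_i,y_j=\cos(\theta_{ij}/2)\,v\pm\sin(\theta_{ij}/2)\,w$. Put the $O(n^2)$ unit vectors $v,w$ (together with the origin) into the \JL point set --- the dimension bound is unaffected --- so that, by linearity, $\|\pi_H v\|,\|\pi_H w\|=1\pm\eps$ \emph{and} $|\langle \pi_H v,\pi_H w\rangle|=O(\eps)$. Writing $a,b=\cos(\theta_{ij}/2)\pi_H v\pm\sin(\theta_{ij}/2)\pi_H w$, the cross terms cancel in $\langle a,b\rangle$, and the inner-product error $E=2\sin(\theta_{ij}/2)\cos(\theta_{ij}/2)\langle\pi_H v,\pi_H w\rangle$ enters the exact expression for $\tan^2(\theta'_{ij}/2)$ only through $E^2$, which carries the same $\sin^2(\theta_{ij}/2)\cos^2(\theta_{ij}/2)$ scaling as the main terms; hence it contributes a \emph{relative} error $O(\eps^2)$ uniformly over $\theta_{ij}\in(0,\pi)$, and $\tan(\theta'_{ij}/2)=(1\pm O(\eps))\tan(\theta_{ij}/2)$ follows. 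With that restored, your final transfer step is correct and is genuinely nicer than the paper's: since $d\log\phi/d\log\tan\phi=\sin\phi\cos\phi/\phi\in[0,1]$, relative distortion of the half-angle tangent implies relative distortion of the angle itself, with no case split, no Lemma~\ref{lem:small-angle}, and no seven-point subdivision of long arcs.
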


This implies that if we project $n$ data points that lie on any high-dimensional sphere to a low-dimensional sphere $\b{S}^k$ with $k \sim \log n$, then the pairwise distances are each individually preserved.  Before we proceed with the proof, we require a key technical lemma.  

\begin{lemma}
For $\eps \in [0,0.5]$ and $x \in [0,0.7]$, 
\begin{enumerate}
  \item[(1)] $\sin((1-2\eps)x) \leq (1-\eps)\sin (x)$, and
  \item[(2)] $\sin((1+2\eps)x) \geq (1+\eps)sin(x)$.
\end{enumerate}
\label{lem:small-angle}
\end{lemma}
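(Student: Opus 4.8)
The plan is to prove each inequality by fixing $x \in [0,0.7]$ and studying the relevant discrepancy as a function of $\eps$ on $[0,1/2]$, exploiting the sign of its second derivative. For part (1), I would set $h_1(\eps) = (1-\eps)\sin x - \sin((1-2\eps)x)$ and aim to show $h_1 \geq 0$. A direct computation gives $h_1(0) = 0$ and $h_1''(\eps) = 4x^2 \sin((1-2\eps)x)$. For $\eps \in [0,1/2]$ the argument $(1-2\eps)x$ lies in $[0,x] \subseteq [0,0.7] \subset [0,\pi]$, so $\sin((1-2\eps)x) \geq 0$ and hence $h_1$ is convex. Its derivative is therefore nondecreasing, so it suffices to check $h_1'(0) = 2x\cos x - \sin x \geq 0$; combined with $h_1(0) = 0$ this forces $h_1$ to be nondecreasing and thus nonnegative on $[0,1/2]$.

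For part (2), I would set $h_2(\eps) = \sin((1+2\eps)x) - (1+\eps)\sin x$, again with $h_2(0) = 0$. Here $h_2''(\eps) = -4x^2 \sin((1+2\eps)x)$, and since $(1+2\eps)x \in [x, 2x] \subseteq [0,1.4] \subset [0,\pi]$ the sine is nonnegative, so $h_2$ is \emph{concave}. Concavity rules out the tangent-at-$0$ argument, so instead I would compare against the chord joining the endpoints: writing $\eps = \lambda \cdot \tfrac12$ with $\lambda \in [0,1]$, concavity gives $h_2(\eps) \geq (1-\lambda)h_2(0) + \lambda\, h_2(1/2) = \lambda\, h_2(1/2)$. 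Thus it is enough to verify the single endpoint value $h_2(1/2) = \sin(2x) - \tfrac32 \sin x = \sin x\,(2\cos x - \tfrac32) \geq 0$.

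Both parts therefore reduce to two elementary one-variable facts on $[0,0.7]$: the tangent condition $2x\cos x \geq \sin x$ (equivalently $\tan x \leq 2x$), and the endpoint condition $\cos x \geq 3/4$. I would establish the first from $\tfrac{d}{dx}(2x - \tan x) = 2 - \sec^2 x$, which is nonnegative exactly when $\cos^2 x \geq 1/2$; since $0.7 < \pi/4$ this holds throughout $[0,0.7]$, and as $2x - \tan x$ vanishes at $0$ it stays nonnegative. The second follows since $\cos$ is decreasing and bounded below on the interval. This is precisely where the numerical thresholds $x \leq 0.7$ and $\eps \leq 1/2$ enter: they keep every sine argument inside $[0,\pi]$ (securing the convexity and concavity signs) while holding $x$ below $\pi/4$ and $\cos x$ above $3/4$ for the two base inequalities.

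The step I expect to require the most care is making the transcendental numeric inequalities airtight rather than appealing to a calculator. I would handle both uniformly with the standard bound $\cos x \geq 1 - x^2/2$, which yields $\cos x \geq 1 - 0.49/2 = 0.755$ for all $x \in [0,0.7]$; this exceeds both $3/4$ (giving the endpoint condition) and $1/\sqrt{2} \approx 0.707$ (giving $\cos^2 x \geq 1/2$, hence the tangent condition), and $0.7 < 3/4 < \pi/4$ is immediate from $\pi > 3$. Everything else is routine differentiation together with the standard convex/concave comparison inequalities.
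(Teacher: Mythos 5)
Your proof is correct, but it runs along a genuinely different axis than the paper's. The paper fixes $\eps$ and differentiates in $x$: it shows $g_\eps(x) = (1-\eps)\sin x - \sin((1-2\eps)x)$ is concave in $x$ on $[0,0.7]$, so its minimum sits at an endpoint, and then checks $g_\eps(0)=0$ and asserts $g_\eps(0.7)\ge 0$; part (2) is obtained by the reflection $h_\eps(x) = g_{-\eps}(-x)$ rather than by a separate argument. You instead fix $x$ and differentiate in $\eps$, getting convexity in $\eps$ for part (1) (hence the tangent-at-zero argument, reducing to $\tan x \le 2x$) and concavity in $\eps$ for part (2) (hence the chord argument, reducing to the single endpoint value $h_2(1/2) = \sin x\,(2\cos x - \tfrac32)$). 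The trade-off: the paper's route is more compact and unified, but its endpoint check is really a one-parameter family of transcendental inequalities ($g_\eps(0.7)\ge 0$ for all $\eps\in[0,0.5]$) that it leaves as ``easily shown''; your route discharges every numerical fact through two clean closed-form inequalities, both certified by $\cos x \ge 1 - x^2/2$, so nothing is left to a calculator. A nice bonus of your version is that it isolates the binding constraint: everything except $\cos x \ge 3/4$ holds well past $0.7$, and since the $\eps = 1/2$ case of part (2) is exactly $\sin 2x \ge \tfrac32 \sin x$, i.e.\ $\cos x \ge 3/4$, your argument shows the lemma's true threshold is precisely $\arccos(3/4) \approx 0.7227$ --- which recovers, and explains, the paper's closing remark that the actual bound is ``slightly over $0.72$.''
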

\begin{proof}
  Let $g_\eps(x) = (1-\eps)\sin x - \sin((1-2\eps)x)$. We will show that for $x \in [0,1]$ and $\eps \in [0,0.5]$, $g_\eps(x)$ is concave. This implies that it achieves its minimum value at the boundary. Now $g_\eps(0) = 0$ for all $\eps$, and it can be easily shown that $g_\eps(0.7) \ge 0$ for $\eps \in [0, 0.5]$. This will therefore imply that $g_\eps(x) \ge 0$ in the specified range. 

It remains to show that $g_\eps(x)$ is concave in $[0,0.7]$. 
\begin{align*}
  g''_\eps(x) &= (1-2\eps)^2 \sin((1-2\eps)x) -(1-\eps)\sin x \\ 
&\le (1-\eps)(\sin((1-2\eps)x) - \sin x)
\end{align*}
which is always negative for $\eps \in [0,0.5]$ and since $\sin x$ is increasing in the range $[0,0.7]$. 

This proves the first part of the lemma. For the second part, observe that $h_\eps(x) = \sin((1+2\eps)x) - (1+\eps)sin(x)$ can be rewritten as $h_\eps(x) = g_{-\eps}(-x)$. The rest of the argument follows along the same lines, by showing that $h_\eps(x)$ is concave in the desired range using that $h''_\eps(x) = g''_{-\eps}(-x)$. 

While the upper bound of $0.7$ on $x$ is not tight, it is close. The actual bound (evaluated by direct calculation) is slightly over $0.72$.
\end{proof}

\begin{proof}[Proof of Theorem \ref{thm:JL-sphere}]
Let $X = \pi_H(Y)$.  We consider two cases, (\emph{Short Case}) when $\|y_i - y_j\| \leq 1/2$ and (\emph{Long Case}) when $\|y_i - y_j\| \in (1/2, 2]$.  

%%%%%% small distance
\emph{Short Case: }
First consider points $y_i, y_j \in \b{S}^d$ such that $||y_i - y_j|| \leq 1/2$.  
Note that $||y_i - y_j|| = 2 \sin(\angle_{y_i, y_j}/2)$, since $||y_i|| = ||y_j|| = 1$.  
By \JL, we know that there exists a constant $c$ such that 
$$
(1-\eps/8)||y_i - y_j|| \leq c ||x_i - x_j|| \leq (1+\eps/8)||y_i - y_j||.
$$  
 
\begin{figure}
\begin{center}
\includegraphics{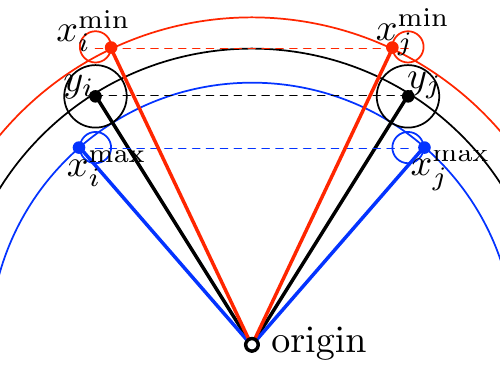}
\end{center}
\caption{\textsf{\label{fig:min-max}
Illustration of the bounds on $\angle_{x_i,x_j}$ when $||y_i - y_j|| \leq 1/2$.  The angle $\angle_{x_i^{\max}, x_j^{\max}}$ is the largest when $||x_i^{\max}|| = ||x_i^{\max}||$ is as small as possible (lies on inner circle) and $||x_i^{\max} - x_j^{\max}||$ is as large as possible (on the outer edges of the disks of diameter $\eps/8$ shifted down from dashed line of length $||y_i - y_j||$.  Bounds for $x_i^{\min}$ and $x_j^{\min}$ are derived symmetrically.}}
\end{figure}
 
We need to compare the angle $\angle_{x_i, x_j}$ with that of
$\angle_{y_i, y_j}$.  The largest $\angle_{x_i, x_j}$ can be is when
$c||x_i|| = c||x_j|| = (1-\eps/8)$ is as small as possible, and so $||cx_i - cx_j|| = (1+\eps/8)||y_i - y_j||$ is as large as
possible.  See Figure \ref{fig:min-max}.  In this case, we have 
\begin{eqnarray*}
(||c x_i||  + ||c x_j||)\sin(\angle_{x_i, x_j}/2) &\leq& ||c x_i - c x_j|| 
\\
2(1-\eps/8) \sin (\angle_{x_i, x_j}/2) &\leq& (1+\eps/8) ||y_i - y_j||
\\
2(1-\eps/8) \sin(\angle_{x_i, x_j}/2) &\leq& (1+\eps/8) 2\sin(\angle_{y_i, y_j}/2) 
\\
\sin(\angle_{x_i, x_j}/2)  &\leq& \frac{1+\eps/8}{1-\eps/8} \sin(\angle_{y_i, y_j}/2),
\end{eqnarray*}
which for $\eps < 4$ implies
$$
\sin(\angle_{x_i, x_j}/2) \leq (1+\eps/2) \sin(\angle_{y_i,y_j}/2).
$$
Similarly, we can show when $\angle_{x_i, x_j}$ is as small as possible (when $c||x_i|| = c||x_j|| = (1+\eps)$ and $||cx_i - c x_j|| = (1-\eps)||y_i - y_j||$), then 
$$
(1-\eps/2) \sin(\angle_{y_i, y_j}/2) \leq \sin(\angle_{x_i, x_j}/2).
$$

We can also show (via Lemma \ref{lem:small-angle}) that since $||y_i - y_j|| \leq 1/2$ implies $\angle_{y_i, y_j} < 0.7$ we have 
\[ \sin((1-\eps) \angle_{y_i, y_j}) \leq (1-\eps/2) \sin (\angle_{y_i,  y_j}) \]
and   
\[
(1+\eps/2)\sin(\angle_{y_i, y_j}) \leq \sin ((1+\eps)\angle_{y_i, y_j}).
\]  

Thus, we have 
\[
\begin{array}{rcccl}
\sin((1-\eps)\angle_{y_i,y_j}/2) &\leq& \sin(\angle_{x_i, x_j}/2) &\leq& \sin((1+\eps)\angle_{y_i, y_j}/2)
\\
(1-\eps)\angle_{y_i,y_j}/2 &\leq& \angle_{x_i, x_j}/2 &\leq& (1+\eps)\angle_{y_i, y_j}/2
\\
(1-\eps)\angle_{y_i,y_j} &\leq& \angle_{x_i, x_j} &\leq& (1+\eps)\angle_{y_i, y_j}.
\end{array}
\]

%%%%%% long distance
\emph{Long Case: }
For $||y_i - y_j|| \in (1/2, 2]$, we consider $6$ additional points $y^{(h)}_{i,j} \in \b{S}^{d+1}$ (for $h \in [1:6]$) equally spaced between $y_i$ and $y_j$ on the shortest great circle connecting them.  Let $\hat Y$ be the set $Y$ plus all added points $\{ y^{(h)}_{i,j}\}_{h = [1:6]}$.  Note that $|\hat Y| = O(n^2)$, so by \JL we have that 
\[
(1-\eps/8)||y_i - \hat{y}_{i,j}|| \leq c ||x_i - \hat x_{i,j}|| \leq (1+\eps/8) ||y_i - \hat y_{i,j}||.
\]
For notational convenience let $y_i = y^{(0)}_{i,j}$ and $y_j = y^{(7)}_{i,j}$.  
Since for $\|y_i - y_j\| \in (1/2, 2]$ then $\|y^{(h)}_{i,j} - y^{(h+1)}_{i,j}\| \leq 1/2$, for $h \in [0:6]$.  This follows since the geodesic length of the great circular arc through $y_i$ and $y_j$ is at most $\pi$, and $\pi/7 < 1/2$.  Then the chordal distance for each pair $\|y^{(h)}_{i,j} - y^{(h+1)}_{i,j}\|$ is upper bounded by the geodesic distance.     
Furthermore, by invoking the short case, for any pair 
\[
(1-\eps)\angle_{y^{(h)}_{i,j},y^{(h+1)}_{i,j}}  \leq  \angle_{x^{(h)}_{i,j}, x^{(h+1)}_{i,j}}  \leq  (1+\eps)\angle_{y^{(h)}_{i,j}, y^{(h)}_{i,j}}.
\]
Then since projections preserve coplanarity (specifically, the points $0$ and $y^{(h)}_{i,j}$ for $h \in [0:7]$ are coplanar, hence $0$ and $x^{(h)}_{i,j}$ for $h \in [0:7]$ are coplanar), we can add together the bounds on angles which all lie on a single great circle.
\begin{eqnarray*}
(1-\eps) \angle_{y_i, y_j} 
\leq 
(1-\eps)\sum_{h=0}^6 \angle_{y^{(h)}_{i,j}, y^{(h+1)}_{i,j}}  
\leq  &
\sum_{h=0}^6 \angle_{x^{(h)}_{i,j}, x^{(h+1)}_{i,j}}
&\leq 
(1+\eps) \sum_{h=0}^6 \angle_{y^{(h)}_{i,j}, y^{(h+1)}_{i,j}}
\leq
\min\{\pi,  (1+\eps)\angle_{y_i, y_j}\} 
\\
(1-\eps) \angle_{y_i, y_j} 
\leq  &
\angle_{x_i, x_j}  
&\leq 
\min\{\pi,  (1+\eps)\angle_{y_i, y_j}\}. \hfill \qedhere
\end{eqnarray*}
\end{proof}

\bibliography{ref}

\bibliographystyle{abbrv}
\end{document}